
\documentclass{article}

\usepackage{microtype}
\usepackage{bbm}
\usepackage{arydshln}
\usepackage{dsfont}
\usepackage{graphicx}
\usepackage{subcaption}
\usepackage{stfloats}
\usepackage{booktabs} 

\usepackage{hyperref}

\usepackage{breakurl}


\newcommand{\TV}{\text{TV}}
\newcommand{\R}{\mathbb{R}}

\newcommand{\newinf}{\mathop{\mathrm{inf}\vphantom{\mathrm{sup}}}}


\usepackage[accepted]{icml2023}

\usepackage{amsmath}
\usepackage{amssymb}
\usepackage{mathtools}
\usepackage{amsthm}

\usepackage[capitalize,noabbrev]{cleveref}

\theoremstyle{plain}
\newtheorem{theorem}{Theorem}[section]

\newtheorem{algo}[theorem]{Algorithm}
\theoremstyle{definition}
\newtheorem{definition}[theorem]{Definition}

\theoremstyle{remark}

\usepackage[textsize=tiny]{todonotes}

\icmltitlerunning{Baselines for Identifying Watermarked Large Language Models}

\begin{document}

\twocolumn[
\icmltitle{Baselines for Identifying Watermarked Large Language Models}




\begin{icmlauthorlist}
\icmlauthor{Leonard Tang}{yyy}
\icmlauthor{Gavin Uberti}{yyy}
\icmlauthor{Tom Shlomi}{yyy}
\end{icmlauthorlist}

\icmlaffiliation{yyy}{Department of Computer Science, Harvard University}

\icmlcorrespondingauthor{Leonard Tang}{leonardtang@college.harvard.edu}

\icmlkeywords{Machine Learning, ICML}

\vskip 0.3in
]



\printAffiliationsAndNotice{}  

\begin{abstract}
We consider the emerging problem of identifying the presence and use of watermarking schemes in widely used, publicly hosted, closed source large language models (LLMs). We introduce a suite of baseline algorithms for identifying watermarks in LLMs that rely on analyzing distributions of output tokens and logits generated by watermarked and unmarked LLMs. Notably, watermarked LLMs tend to produce distributions that diverge qualitatively and identifiably from standard models. Furthermore, we investigate the identifiability of watermarks at varying strengths and consider the tradeoffs of each of our identification mechanisms with respect to watermarking scenario. Along the way, we formalize the specific problem of identifying watermarks in LLMs, as well as LLM watermarks and watermark detection in general, providing a framework and foundations for studying them. 
\end{abstract}

\section{Introduction}

Recent progress in large language models (LLMs) has resulted in a rapid increase in the ability of models to produce convincingly human-like text, sparking worries that LLMs could be used to spread disinformation, enable plagiarism, and maliciously impersonate people. As such, researchers have begun to develop methods to detect AI generated text. These include watermarking algorithms, which subtly modify the outputted text to allow for better detection, given that the detector has sufficient access to watermarking parameters. Differing from previous work, where the focus is on determining if text has been produced by a watermarked model, here we study the problem of if a \textit{language model} has been watermarked. Critically, our black-box algorithms only require querying the model and do not necessitate any knowledge of underlying watermarking parameters.  



\section{Related Work}
\label{sec:related-work}

\paragraph{Generated Text Detection Via Statistical Discrepancies}
Recent methods such as DetectGPT and GPTZero distinguish between machine-generated and human-written text by analyzing their statistical discrepancies \citep{gptzero, mitchell2023detectgpt}. DetectGPT compares the log probability computed by a model on unperturbed text and perturbed variations, leveraging the observation that text sampled from a LLM generally occupy negative curvature regions of the model's log probability function. GPTZero instead uses perplexity and burstiness to distinguish human from machine text, with lower perplexity and burstiness indicating a greater likelihood of machine-generated text. However, these heuristics do not generalize and are often fallible.

\paragraph{Detection by Learning Classifiers} Several papers have proposed to train classifiers to distinguish between AI and human generated text. During the initial GPT-2 release, OpenAI trained a RoBERTa classifier to detect GPT-2 generated text with 95\% accuracy \cite{solaiman2019release}. More recently, OpenAI fine-tuned a GPT model on a dataset of machine-generated and human texts focusing on the same topic, with a true positive identification rate of 26\% \cite{OpenAIClassifier}. Similarly, \citet{guo2023close} collected the Human ChatGPT Comparison Corpuse (HC3) and fine-tuned RoBERTa for the detection task.


Notably, the capabilities of such classifiers decrease as machine-generated text becomes increasingly human-like. \citet{sadasivan2023can} show theoretically that for sufficiently advanced language models, machine-generated text detectors offer only a marginal improvement over random classifiers. Moreover, such methods are prone to adversarial attacks and are not robust to out-of-distribution text.


\paragraph{Watermarking Large Language Models}
An alternative to detecting of machine-generated text using statistical discrepancies and learned classifiers is the concept of watermarks. Watermarks are hidden patterns in machine-generated text that are imperceptible to humans, but algorithmically identifiable as synthetic. Natural language watermarks long predate the development of LLMs, relying on methods such as synonym substitution, as well as syntactic and semantic transformations \cite{topkara2005natural}. While these schemes are capable of preserving syntactic and semantic meaning, they often break stylistic constraints.



More recently, \citet{kirchenbauer2023watermark} propose a watermarking scheme that minimizes degradation in the quality of generated text, while being efficient to detect in text. In unpublished work, \citet{aaronson2023} introduces a conceptually similar watermarking scheme. At any given inference step, both watermarking approaches modify the output token probabilities of the underlying model with an algorithm using a secret key, hashing, and pseudorandom function properties. We broadly refer to both of these watermarks as \textit{Kirchenbauer watermarks}, which we develop a subset of our identification mechanisms against.

\section{A Framework for Language Model Watermarks and Watermark Detection}
\label{sec:watermarking-tech}

Before introducing our identification algorithms, we first outline a framework and core terminology for the problem of identifying watermarks in LLMs.


\subsection{Large Language Models}

\begin{definition}[Vocabulary]
A \textit{vocabulary} in the context of LLM watermarking is a set of tokens $\mathcal{T}$ along with an encoder $\mathcal{E}$ and decoder $\mathcal{D}$ that encode and decode between sequences of tokens and text. For the vocabulary to be valid, we require $\mathcal{D}(\mathcal{E}(x)) = x$ for any string $x$.
\end{definition}

Notably, vocabulary encoders are sensitive to concatenation. That is, it is not always the case that $\mathcal{E}(x_1 x_2) = \mathcal{E}(x_1)\mathcal{E}(x_2)$ for strings $x_1, x_2$. As an example relevant to our identification algorithms, consider how numerals are tokenized in Google's \verb|Flan-T5| model. Suppose that $x = $ \verb|"5"| and $\mathcal{E}(x)$  is the token with index \verb|755|. However, $\mathcal{E}(xx)$ is not token \verb|755| repeated twice -- rather, it is a single token with index \verb|3769|. 

\begin{definition}[Large Sequence Model]
A \textit{large sequence model} $S$ over a vocabulary $\mathcal{T}$ is a map from a finite sequence of tokens $\mathcal{T}^*$ to a set of logits over all tokens $L \in \R^{|\mathcal{T}|}$, along with a sampler $R : \R^{|\mathcal{T}|} \to \Delta \mathcal{T}$ that randomly outputs a token based on the output logits.
\end{definition}


However, this definition does not capture a characteristic of LLM behavior that is critical for watermark identification. In all publicly hosted LLMs, the distribution over logits is highly uneven. Thus, we define a LLM as follows:




\begin{definition}[Large  Language Model]
A large sequence model is a \textit{large language model} if an adversary, only knowing the training data, is able to guess the sampled token with probability much greater than $1/|\mathcal{T}|$. 
\end{definition}


\subsection{Watermarks on Large Language Models}

With an understanding of relevant LLM mechanics, we now define a LLM watermark as follows:

\begin{definition}[Watermark]
A \textit{watermark} $W_s$ with secret key $s$ over a vocabulary $\mathcal{T}$ is a map $W_s : \mathbb{L}_\mathcal{T} \to \mathbb{L}_\mathcal{T}$, where $\mathbb{L}_\mathcal{T}$ is the set of LLMs with vocabulary $\mathcal{T}$. 
\end{definition}

\begin{definition}[Principled Watermark]
    
Let $\mathcal{L_A}, \mathcal{L_B}$ be large language models that are identical up to token permutation. If, for any pair of such LLMs and all keys $s$, $W_s(\mathcal{L_A})$ is identical to $W_s(\mathcal{L_B})$ up to the same token permutation, then each $W_s$ is a \textit{principled watermark}.
\end{definition}

All existing LLM watermarks that we are aware of satisfy this definition. However, watermarks that obey this definition are not necessarily useful. For instance, the identity watermark is a valid principled watermark, but is not algorithmically detectable within a sequence of text. We therefore introduce the following notion of watermark \textit{detectability} in text:


\begin{definition}[Detectability]
A watermark $W_s$ is $(p, P)$-\textit{detectable} for a model $\mathcal{L}$, some expression $P$, and $p \in (0, 0.5]$, if there exists a detector 
$D_s : \mathcal{T}^n \times \mathcal{T}^n \to \{0, 1\}$
that runs in $P(n)$ time and correctly distinguishes between sequences of length $n$ generated by $\mathcal{L}$ and $W_s(\mathcal{L})$ with probability at least $\frac{1}{2} + p$.
\end{definition}

Critically, \textit{detectability} of watermarks in text is different from the notion of \textit{identifying} models that have been watermarked, the central aim of this work.



A watermark should ideally not materially change the quality of LLM text generation. While quality is somewhat subjective, if it is impossible to distinguish watermarked text from standard text generated by a LLM, then the watermark must not affect any perceivable metric of quality. We use this observation to craft the following definition:

\begin{definition}[Strong Quality-Preserving Watermark]
Let $W_s$ be a watermark where $s$ has length $m$ and $\mathcal{L}$ is a LLM, $p$ and $q$ are polynomials, $n \le p(m)$, and $q(m) \geq 2$. Consider a $p(m)$-time adversary $A$ which takes in text and classifies it as watermarked or benign. $W_s$ is quality-preserving if, for all $\mathcal{L}, m, n, A, p, \text{ and } q$, $A$ is correct with probability at most $1/2 + 1/q(m)$ when given texts of length $n$ generated by $\mathcal{L}$ and $W_s(\mathcal{L})$, over the randomness of LLM generation and the choice of $s$. 


\end{definition}

This definition is more than sufficient for a watermark to preserve the quality of a text. None of the existing watermarks discussed here satisfy this standard of quality preservation, despite being relatively quality-preserving in practice.

For a deterministic LLM, strong quality-preservation and detectability conflict. The only way to be strong quality-preserving is to almost never modify the benign output, in which case the watermark is undetectable. The same is not true for non-deterministic language models.

\begin{theorem}
Assuming the existence of one-way functions, there exists a detectable watermark which is strong quality-preserving for non-deterministic LLMs.
\end{theorem}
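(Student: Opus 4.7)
The plan is to instantiate a pseudorandom function (PRF) based watermark in the style of Aaronson (2023): since one-way functions imply PRFs via Goldreich--Goldwasser--Micali, fix a PRF family $\{F_s\}$ keyed by $s$ of length $m$. For an underlying LLM $\mathcal{L}$ with next-token distribution $(p_t)_{t \in \mathcal{T}}$ at context $c$, define $W_s(\mathcal{L})$ to output $\arg\max_{t \in \mathcal{T}} r_t^{1/p_t}$, where $r_t \in (0,1)$ is the rescaled value of $F_s(c,t)$. A standard exponential / Gumbel-max calculation shows that if the $r_t$ were truly uniform and independent across $t$, then each token $t$ would be selected with probability exactly $p_t$, so this ``ideal'' version is exactly $\mathcal{L}$.

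I would then verify strong quality-preservation by reduction to PRF security. Any $p(m)$-time adversary $A$ distinguishing length-$n$ outputs of $\mathcal{L}$ from $W_s(\mathcal{L})$ with advantage $1/q(m)$ on texts of length $n \le p(m)$ immediately yields a PRF distinguisher: replacing $F_s$ by a uniformly random function $R$ turns $W_s(\mathcal{L})$ into a sampler whose marginal distribution equals $\mathcal{L}$'s, so $A$'s advantage in the real game is at most the advantage of the induced PRF distinguisher. Since PRF security gives negligible $\mathrm{negl}(m) < 1/q(m)$ for any polynomial $q$ once $m$ is large enough, the definition is met.

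For detectability with key $s$, the detector $D_s$ recomputes $r_{t_i} = F_s(c_i, t_i)$ for each output token and thresholds the test statistic $\sum_{i=1}^n -\log(1-r_{t_i})$. Under the null of unwatermarked $\mathcal{L}$, the chosen $r_{t_i}$ are (up to PRF security) close to uniform, so the statistic has mean $\approx n$; under $W_s(\mathcal{L})$, the Gumbel-max rule biases $r_{t_i}$ toward $1$ at every step of positive entropy, strictly increasing the expected summand. Since $\mathcal{L}$ is non-deterministic, this per-step gap is strictly positive, and Hoeffding concentrates the statistic for large enough $n$, yielding a constant advantage $p > 0$ in time linear in $n$.

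The main obstacle is the precise interpretation of ``non-deterministic'': the theorem as stated does not quantify the entropy of $\mathcal{L}$, so the detectability conclusion must be read as promising a pair $(p, P)$ that may depend on $\mathcal{L}$ and on a length $n$ large enough for concentration of the test statistic. A secondary but routine point is that the PRF outputs bits rather than reals in $(0,1)$, so $r_t$ must be sampled with sufficient precision that both the quality-preserving hybrid and the detection statistic incur only negligible error; drawing a polynomial number of PRF bits per token handles this and is absorbed into the negligible slack already tolerated by both definitions.
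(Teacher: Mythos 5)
Your construction and the paper's are essentially the same at the core: both derandomize the LLM's sampling step with a PRF (whose existence follows from one-way functions) applied to the preceding tokens, and both prove strong quality-preservation by the same reduction — a distinguisher between $\mathcal{L}$ and $W_s(\mathcal{L})$ would distinguish $F_s$ from a truly random function, whose ideal version samples exactly from $\mathcal{L}$. Where you genuinely diverge is detectability. The paper's detector exploits the fact that the watermark makes generation \emph{deterministic} given the seed: it simply reruns the model on the same prompt and checks whether the output repeats, which is why non-determinism of $\mathcal{L}$ is exactly the hypothesis needed and no per-step entropy accounting, key-dependent statistic, or concentration bound is required (at the cost of needing two generations from the same prompt rather than a single text, and, as the paper notes, extreme fragility to any edit). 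Your detector is the Aaronson-style keyed statistical test $\sum_i -\log(1-r_{t_i})$ on a single text, which buys a genuine single-sample, text-only test but forces you to quantify a per-step bias that depends on the entropy of $\mathcal{L}$, make $(p,P)$ and the required $n$ model-dependent, and handle PRF-precision and concentration details — all of which you correctly flag and which are consistent with the paper's definition of $(p,P)$-detectability being relative to a fixed model. One small point to keep explicit in your reduction: the hybrid with a truly random function reproduces $\mathcal{L}$'s distribution over a length-$n$ text only because the contexts fed to the PRF are distinct across steps (e.g., full prefixes), so the per-step uniform variates are independent; with a short rolling window this would need a separate argument. Both routes prove the theorem; yours is more detailed than the paper's admittedly terse argument, while the paper's rerun trick is the more elementary way to get detectability from non-determinism alone.
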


\begin{proof}
Let $f_s$ be a pseudorandom function, which exists as one-way functions exist. Consider the watermark $W_s$ from \citet{kirchenbauer2023watermark} that generates a pseudorandom number $r \in [0,1]$ by applying $f_s$ to the previous tokens. The next token is then chosen by using $r$ to select the next token from the LLM logits.

This is strong quality-preserving, as otherwise an adversary that could distinguish a watermarked from unmarked language model could be used to distinguish $f_s$ from a random function. Since $W_s$ is deterministic for any given seed, it can be detected by rerunning the watermarked LLM and observing if it returns the same output.
\end{proof}

Such a watermark is detectable, and perfectly preserves quality, though it fails the desideratum that watermarks should still be detectable after the text is modified slightly. We will not formalize this desideratum in this paper.

Though other watermarks are less sensitive to changes to the text, all known watermarks are vulnerable to attacks that preserve generated text quality while evading detection \cite{sadasivan2023can}. As such, watermarkers might have an incentive to hide their watermarking algorithms or even the fact that they use a watermark.


\begin{definition}[Measurable Watermark]
Consider the following game played by an polynomial time (in $|s|$) distinguisher $A$ who has black-box access to a language model that is possibly watermarked. Suppose we have two generated texts from a model $\mathcal{L}$ and watermarked model $W_s(\mathcal{L})$. The adversary wins if it can determine which text is watermarked. 
The watermark is \textit{measurable} if there exists $A$ such that the probability of the adversary winning is at least $1/2 + 1/p(|s|)$ where $p$ is some polynomial.
\end{definition}



Detectability is at odds with immeasurability. The easier it is for a detector with access to underlying watermark seed to detect the watermark, the easier it is for a detector without access to the seed to detect it. This conflict is provable. In fact, for watermarks with a given detectability, there is a single adversary that can detect all of them.

\begin{theorem}\label{thm:detectability}


    Let $R$ be a sequence model which always samples uniformly from $\{0,1\}$. Denote $R^n$ as output text generated from $R$ with length $n$, and $W_s(R)^n$ similarly. 
    
    There exists an adversary $A$ such that, for any watermark $W_s$ and detector $D : \{0,1\}^n \to \{0,1\}$ such that $\Pr[D(R^n) = 0]/2 + \Pr[D(W_s(R)^n) = 1]/2 \ge \frac 12 + p$, $A$ can, with probability at least $1 - \Delta$, distinguish $R$ and $W_s(R)$ in $O(n\log(\frac{n}{p\Delta}\log(\frac{1}{\Delta})))$.
\end{theorem}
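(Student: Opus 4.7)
The plan is to build one universal adversary $A$ that works for every $W_s$ by exploiting a peculiarity of $R$: because $R$'s conditional distribution at every prefix is exactly uniform, any watermark that shifts $R^n$ must produce some non-uniform prefix-conditional under $W_s(R)$, and $A$ will just hunt for one.

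The heart of the argument will be a structural bound. Letting $Q = W_s(R)^n$ and $U = \{y \in \{0,1\}^n : Q(y) = 2^{-n}\}$, I will observe that $R^n$ and $Q$ have identical densities on $U$, so the total variation distance $\TV(R^n, Q)$ is supported entirely on $U^c$. The one-line triangle inequality $|R^n(y) - Q(y)| \le R^n(y) + Q(y)$ together with the identity $Q(U) = |U|/2^n$ (which holds because the density is $2^{-n}$ pointwise on $U$) then yields $\TV(R^n, Q) \le |U^c|/2^n$. The detector-advantage hypothesis on $D$ translates, by the standard relation between distinguisher advantage and total variation, to $\TV(R^n, Q) \ge 2p$, whence $|U^c|/2^n \ge 2p$.

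With this bound in hand the adversary is almost immediate: $A$ repeatedly samples a length-$n$ trajectory $y^{(k)}$ from the (possibly watermarked) model, queries the model for its conditional distribution at every prefix $y^{(k)}_{<i}$, and outputs ``watermarked'' if any returned distribution is non-uniform. Under $R$ the conditionals are always uniform, so $A$ never raises a false alarm; under $W_s(R)$, each trajectory witnesses a non-uniform conditional iff it lands in $U^c$, an event of probability $\ge 2p$, and taking $m = O(p^{-1} \log(1/\Delta))$ independent trajectories drives the failure probability below $\Delta$ by an elementary Bernoulli bound.

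The main obstacle is the structural lemma $|U^c|/2^n \ge 2p$: it is the only place where the specific choice of $R$ as the uniform model really enters, and it hinges on the observation that $Q$ and $R^n$ have identical total mass on $U$, forcing all the TV-contribution onto $U^c$. Everything after that is routine amplification, with each trajectory costing $O(n)$ work for a total matching the claimed running time up to the logarithmic factors.
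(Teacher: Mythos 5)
Your structural lemma is fine as far as it goes: with $Q=W_s(R)^n$ and $U=\{y: Q(y)=2^{-n}\}$, the bound $\TV(R^n,Q)\le |U^c|/2^n$ is correct, the detector hypothesis does give $\TV(R^n,Q)\ge 2p$, and any $y\in U^c$ must contain at least one non-uniform prefix-conditional. The genuine gap is in the adversary itself: you have it ``query the model for its conditional distribution at every prefix'' and flag the model if any conditional is non-uniform. That presumes exact access to next-token probabilities, which is precisely the access the theorem (and the paper's proof) does not grant; the paper introduces probability access only as a hypothetical intermediate step and then explicitly removes it (``Since the adversary cannot sample probabilities, it must repeatedly sample a certain token''), replacing it with repeated sampling at each prefix, a threshold $q$, Chernoff bounds, and a union bound over false alarms. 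Once you are restricted to sampling, your argument breaks for a quantitative reason: knowing only that a trajectory in $U^c$ contains \emph{some} conditional different from $1/2$ gives no lower bound on \emph{how far} it deviates, so no finite per-prefix sample size $k$ can be fixed to detect it, and under $R$ the ``never raises a false alarm'' claim also fails since empirical frequencies fluctuate. The missing ingredient is exactly the paper's subadditivity/averaging step, which shows that a noticeable fraction (about $\tfrac{2-2\sqrt{1-p}}{n}$) of prefixes have conditional deviation at least $v\approx\tfrac{1-\sqrt{1-p}}{n}$; that quantitative bound is what makes sample-based estimation (and the stated running time) possible, and your decomposition through $U$ does not supply it.

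A secondary issue: even granting exact probability access, your cost accounting is $m\cdot O(n)$ with $m=O(p^{-1}\log(1/\Delta))$, i.e.\ $O\!\bigl(\tfrac{n}{p}\log\tfrac{1}{\Delta}\bigr)$, and the $1/p$ factor is polynomial, not logarithmic, so it does not match the claimed $O\!\bigl(n\log\bigl(\tfrac{n}{p\Delta}\log\tfrac{1}{\Delta}\bigr)\bigr)$ ``up to logarithmic factors.'' To repair the proof you would need to (i) replace the exact-uniformity check by a deviation threshold justified by a per-step quantitative bound such as the paper's, and (ii) redo the sample-complexity and runtime analysis with the Chernoff/union-bound machinery rather than the elementary Bernoulli bound alone.
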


\begin{proof}
Consider the distribution of next-token probabilities for $0$ in a text.




If the detector correctly distinguishes between positive and negative distributions is at most $\frac{1}{2} + p$, we can use the bound from \citet{sadasivan2023can} to bound the total variation distance between $R^n$ and $W(R)^n$:
\[
\frac{1}{2} + p \le \frac{1}{2} + \TV(R^n, W(R)^n) - \frac{\TV(R^n, W(R)^n}{2} \]\[
\Rightarrow \TV(R^n, W(R)^n) \ge 1 - \sqrt{1 - p}
\]

Suppose that the adversary has the ability to not only sample the generator, but also obtain its probabilities for the next token. Consider the average variation distance from uniform of the next token from the watermarked generator, over a uniformly random in $(0, \dots, n-1)$  number of uniformly randomly generated previous tokens. By the subadditivity of the total variation measure, the average variation distance must be at least $\frac{1 - \sqrt{1-p}}{n}$. Since it is bounded in $[0, 0.5]$, at least $\frac{2 - 2\sqrt{1-p}}{n}$ of the sampled probabilities must be at least $\frac{1 - \sqrt{1-p}}{n}$. To ensure that with probability $1 - \Delta/2$ the adversary has sampled at least one such probability, it must take at least $m$ samples, with $(1 - \frac{2 - 2\sqrt{1-p}}{n})^m \le \Delta/2$, and so 
\[
m \le \frac{\log(\Delta/2)}{\log(1 - \frac{2 - 2\sqrt{1-p}}{n})}
\]

Since the adversary cannot sample probabilities, it must repeatedly sample a certain token. Let $k$ be the number of samples it takes from each particular token, and let the adversary classify the sample depending on whether the proportion of `0' generations differs from $1/2$ by at least $q$. Using the two-sided Chernoff bounds, we can get the probability of any particular sample from the uniform generator being misclassified. We then use union bounds to get the total probability of the uniform generator being misclassified, and use it to get bounds on $k$ and $q$:
\begin{align*}
    2m\exp(-k((0.5 + q)\log(1 + 2q) \\ + (0.5 - q)\log(1 - 2q))) \le \Delta
\end{align*}

We use a similar method to obtain the probability that a token with variation from uniform $v = \frac{1 - \sqrt{1-p}}{n}$ avoids detection:
\begin{align*}
\exp(-k((0.5 + q)\log(\frac{0.5 + q}{0.5 + v}) \\ + (0.5 - q)\log(\frac{0.5 - q}{0.5 - v}))) \le \Delta/2
\end{align*}
To get the $q$ which requires the fewest samples, we set these bounds to be equal. Doing this, we get a detection algorithm polynomial that takes $O(n\log(\frac{n}{pv\Delta}\log(\frac{1}{\Delta})))$ which correctly classifies both the random and any watermarked generator with probability at least $1 - \Delta$. 

\end{proof}

The fallout of this theorem is that, when the natural distribution of a language model acting on a fixed prompt is known, it cannot be watermarked undetectably. This forms a theoretical foundation for our identification mechanisms. 






\section{Understanding Language Model Output and Probability Distributions}

A watermark can be characterized and detected by how it affects the logits distribution of the underlying LLM. As such, our algorithms for watermark detection are centered heavily on analyzing shifts in language model output as well as logit and probability distributions. Therefore, it is critical to gain intuition for how these distributions usually behave.

\begin{figure}[h]
\vskip 0.2in
\begin{center}
\centerline{\includegraphics[scale=0.395]{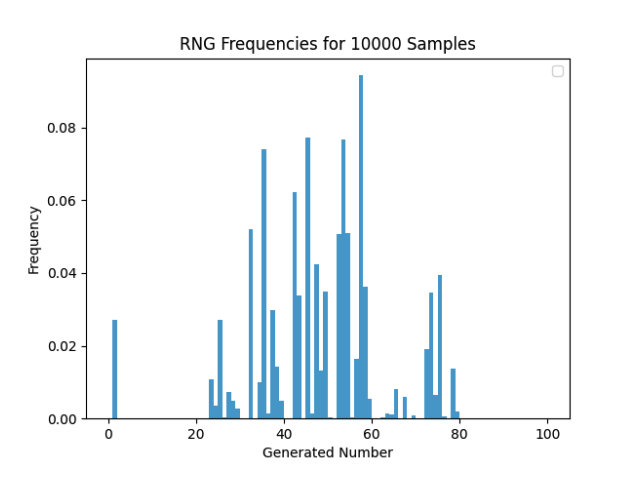}}
\caption{Example of a 10,000-sample RNG distribution generated by Alpaca-LoRA. Clearly, the distribution is far from uniform and exhibits idiosyncratic generations resulting from the training set.}
\label{fig:rng-example}
\end{center}
\vskip -0.2in
\end{figure}
\subsection{Random Bit Generation}

A simple case casts LLMs as random bit generators. Ideally, a LLM can generate bits uniformly at random when prompted, and so the identification mechanism in Theorem \ref{thm:detectability} would apply. We attempted random bit generation with OpenAI models. For each model, we use the following prompt:
\begin{verbatim}
"""Choose two digits, and generate a 
uniformly random string of those digits. 
Previous digits should have no influence 
on future digits:"""
\end{verbatim}
This prefix was followed by a fixed sequence of 20 `0's and `1's, produced by a Python random number generator. 

We let each model generate 100 tokens. For each token, if `0' and `1' were both a top-5 likely token, the probability of generating `0' was recorded. This procedure was repeated across multiple generations. A graph of the recorded probabilities for each model is displayed in Figure \ref{fig:bi-hist}. 


These distributions fail tests for normality and unsurprisingly, the corresponding generated bits are far from uniform. Surprisingly, the qualitative output probability distributions of each model are strikingly different. Ada (\ref{fig:sub1}) produces a roughly monotonically increasing distribution, babbage (\ref{fig:sub2}) produces a roughly truncated normal distribution, curie (\ref{fig:sub3}) produces a distribution with probability mass concentrated around $0$ and $1$, and davinci (\ref{fig:sub4}) produces a trimodal distribution with peaks around $0$, $0.5$, and $1$.

\begin{figure}[htbp]
\centering
\begin{subfigure}{.35\textwidth}
  \centering
  \includegraphics[width=\linewidth]{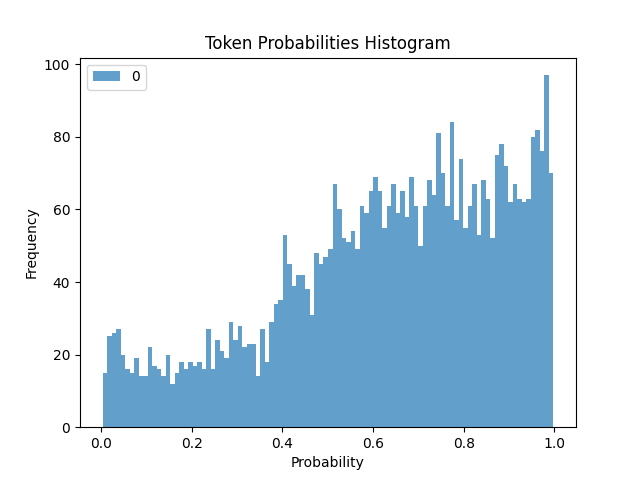}
  \caption{Ada probabilities of generating 0.}
  \label{fig:sub1}
\end{subfigure}
\begin{subfigure}{.35\textwidth}
  \centering
  \includegraphics[width=\linewidth]{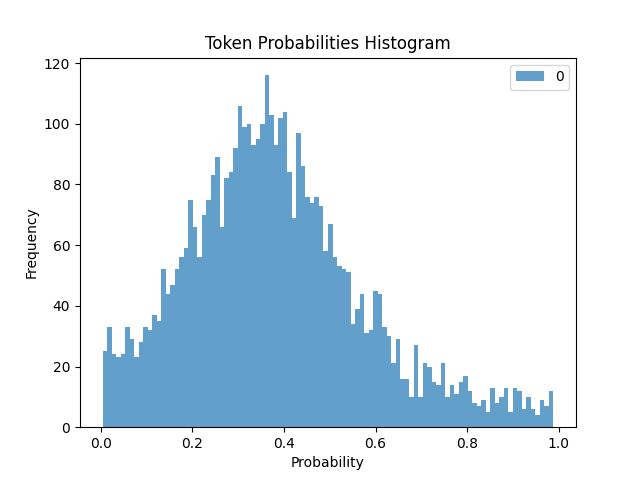}
  \caption{Babbage probabilities of generating 0.}
  \label{fig:sub2}
\end{subfigure}
\begin{subfigure}{.35\textwidth}
  \centering
  \includegraphics[width=\linewidth]{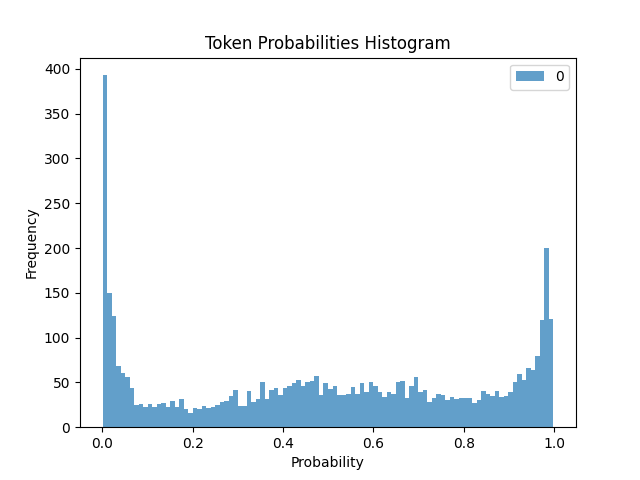}
  \caption{Curie probabilities of generating 0.}
  \label{fig:sub3}
\end{subfigure}
\begin{subfigure}{.35\textwidth}
  \centering
  \includegraphics[width=\linewidth]{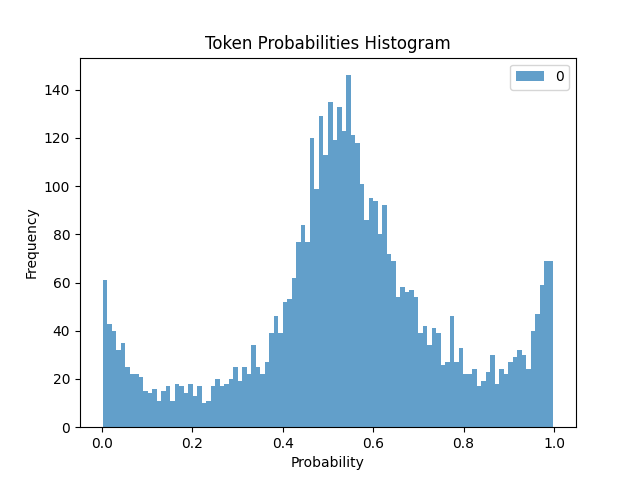}
  \caption{Davinci probabilities of generating 0.}
  \label{fig:sub4}
\end{subfigure}
\caption{Comparison of OpenAI engine behavior on a simple random bit generation task. The $x$-axis displays the retrieved probability of generating $0$, and the $y$-displays the frequency of each probability bucket over multiple generations.}
\label{fig:bi-hist}
\end{figure}


\begin{figure*}[t]
\vskip 0.2in
\begin{center}

\subfloat[\centering Lorenz curve for a unmarked Flan-T5-XXL language model. Most of the probability mass is concentrated in a few top tokens, as visualized by the sharp spike towards the right of the Lorenz curve.] {{\includegraphics[scale=0.375]{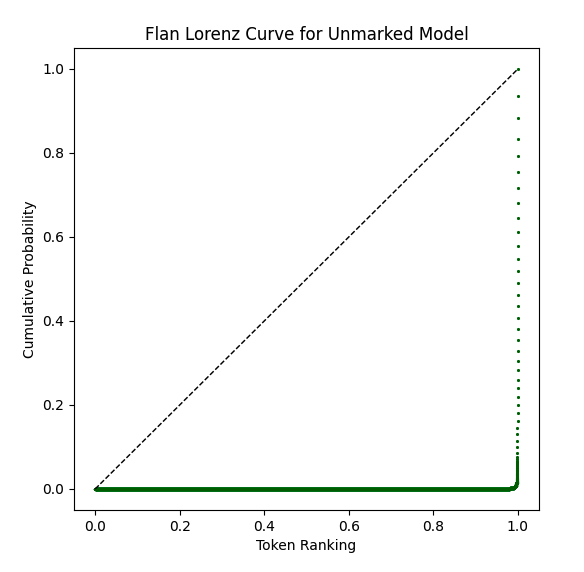}
}}
\qquad
\subfloat[\centering Lorenz curve for a Flan-T5-XXL model affected by a Kirchenbauer watermark with parameters $\gamma = 0.5$ and $\delta = 100$. Notice that the Lorenz curve is slightly smoother under this setting, due to the $\delta$ application on low-probability tokens.] {{\includegraphics[scale=0.375]{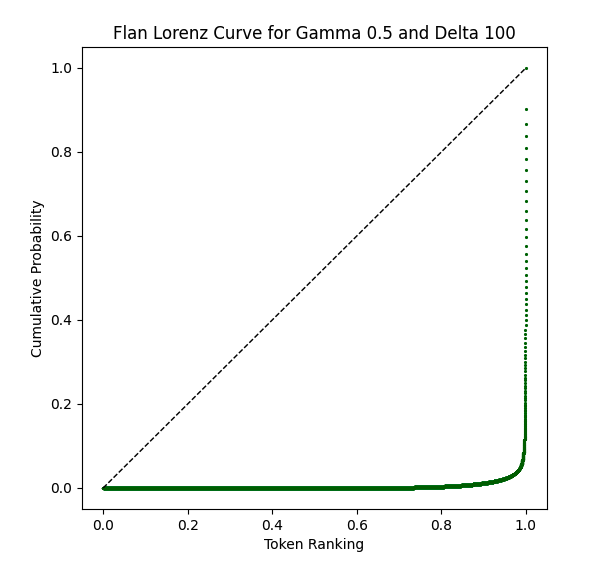}
}}

\caption{Examples of ranked probability Lorenz curves of the first token generated by Flan-T5-XXL under different Kirchenbauer watermarking strengths. The dashed line represents a perfectly uniform distribution. In both watermarking settings, the majority of the probability mass is concentrated in the top few tokens.}

\label{fig:lorenz}
\end{center}
\vskip -0.2in
\end{figure*}
\subsection{Ranked Probability Lorenz Curves}
\label{sec:ranked-prob-lorenz-curves}


Inspired by tools from econometrics, we use the Lorenz curve as a means of understanding language model behavior. Specifically, we examine the output token probabilities of a model and construct \textit{ranked probability Lorenz curves}. The $x$-axis of a ranked probability Lorenz curve lists the tokens sorted from lowest to highest probability, and the $y$-axis of the curve displays the probabilities of each token. Due to the sorted construction of the $x$-axis, the ranked token Lorenz curve is monotonically increasing. Figure \ref{fig:lorenz} displays an example of these Lorenz curves.


The Lorenz curve is an effective tool for understanding the effects of a watermark from \citet{kirchenbauer2023watermark}. Such a watermark adds a constant term $\delta$ to a randomly selected subset of green list token logits. In the ranked token Lorenz curve, this is notably reflected by a smoothing effect, as seen on the right of Figure \ref{fig:lorenz}. This indicates that a portion of low-probability tokens have experienced a $\delta$-increase.

To rigorize this notion of smoothness, one can compute the Gini coefficient $G$ of the Lorenz curve:
\begin{center}
    $\displaystyle  G = \frac{\sum_{i = 1}^n\sum_{j = 1}^n |x_i - x_j|}{2n^2 \overline{x}}$
\end{center} Here $x_i, x_j$ are the probabilities of $i$-th and $j$-th tokens on the curve, indexed by the ordered ranking, and $\overline{x}$ is the average probability. Traditionally in economics, $G$ is used to measure the inequality of a distribution. High $G$ suggests more inequality, reflected in unmarked distributions, while low $G$ suggests less inequality and a smoother distribution, suggesting the presence of a watermark.

\paragraph{Recovering Logits from Sampling} In practice, exact logits may not be available for analysis, for example when interacting with ChatGPT. In this case, we approximate token probabilities by sampling a large number of tokens from a language model, and calculating empirical probabilities.



\subsection{Random Number Generation}
\label{sec:random-number-generation}
In the case of a publicly hosted API, oftentimes logit data is not directly accessible. As a suitable approximation, we instead consider the distribution of tokens from a small subset of the original vocabulary. This enables us to analyze the shifting behavior of a LLM before and applying a watermark, without requiring access to output logits.

Specifically, we treat LLMs as random number generators, asking them to generate integers from 1 to 100, inclusive. Figure \ref{fig:rng-example} displays an example 10,000-sample distribution from Alpaca-LoRA using the following prompt:
\begin{verbatim}
"""Below is an instruction that 
describes a task. Write a response that 
appropriately completes the request.

### Instruction:
Generate a random number between 
1 and 100.

### Response:"""    
\end{verbatim}

While this is a natural task to restrict the output token set of a model, it is certainly not the only task that would do so. For example, asking a LLM to provide a synonym for a given input word, such as ``intelligent'', that starts with a specific letter, such as ``c'', would also severely restrict the output distribution to a subset resembling something like \{``clever'', ``canny'', ``crafty'', ``calculating'', ``cunning''\}. 

A key benefit of the random number generation task over other alternatives, however, is that the output space for any model is fairly consistent between models, generating integers between 1 and 100, regardless of model capacity. While the distribution of numbers is certainly expected to change across models, the range of outputs is relatively more stable.



\begin{figure*}[h]
    \centering
    \subfloat[\centering Logit distribution produced by an unmarked Alpaca-LoRA model. The tokens are indexed by the original tokenizer ordering. While there is certainly a wide variation in logit values, there is no distinct separation.] {{\includegraphics[scale=0.53]{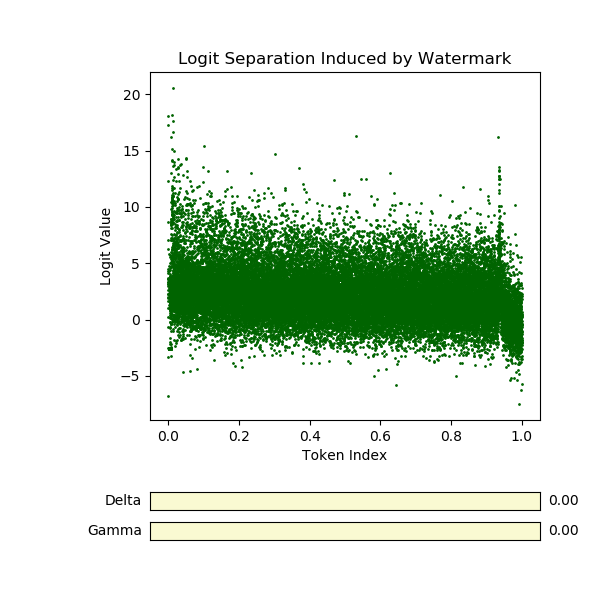}
    }}
    \qquad
    \subfloat[\centering Logit distribution produced by a Alpaca-LoRA model with a $\gamma = 0.15$ and $\delta = 40$ Kirchenbauer watermark. There is a distinct logit separation of size $\delta$ into two bands, one for the original logits, and one for logits perturbed by $\delta$. The width of the top band corresponds to $\gamma$.] {{\includegraphics[scale=0.53]{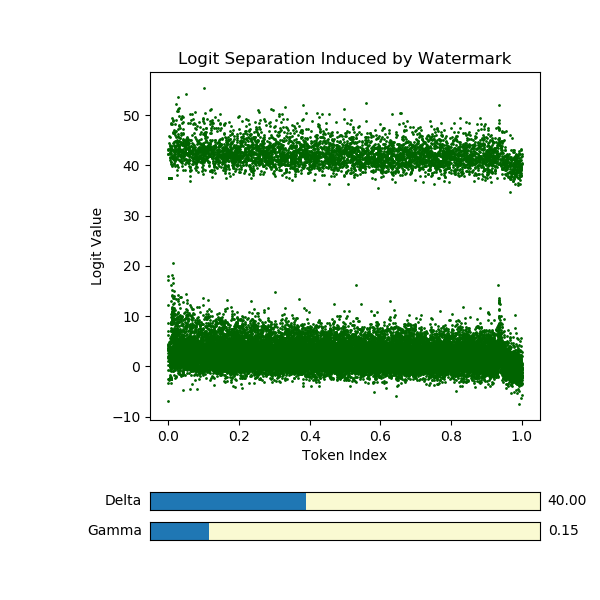}}}
    \caption{Logit separation induced by Kirchenbauer watermarks.}
    \label{fig:logit-separation}
\end{figure*}

\section{Baseline Mechanisms for Identifying Watermarked Large Language Models}

Here, we introduce three simple watermark detection algorithms based on analyzing exact and approximate probability and logit distributions. Critically, our algorithms do not require any access to information governing the underlying watermark generation procedure, such as a hash function or random number generator. We hope these algorithms can serve as sound baselines for future work in this field.

Our three proposed algorithms vary in their access to exact versus sampled logits, generalizability across watermarking schemes, and statistical robustness. Depending on the objective and identification constraints, such as efficient computation, interpretable test statistic, availability of logits, and robustness to random shifts in the data, a different algorithm will be ideal.

\subsection{Measuring Divergence of RNG Distributions}
\label{sec:kolmogorov}

The first algorithm is centered on the simple idea of measuring divergence in ``random'' number distributions generated by a LLM, as alluded to in \S \ref{sec:random-number-generation}. In particular, we make use of the Two-sample Kolmogorov–Smirnov test to determine whether an empirical random number distribution of a \textit{watermarked} LLM shifts from an empirical random number distribution of an \textit{unmarked} LLM.



Given a specific LLM, we first generate a 1000 number empirical distribution $F_{u, n}$ as described in \S \ref{sec:random-number-generation} using an \textit{unmarked} model. We watermark the LLM and produce an empirical distribution $F_{w, m}$ in the same fashion. We then compute the Kolmogorov-Smirnov statistic as follows:
\begin{center}
    $D_{n , m} = \sup_{x} |F_{u, n} (x) - F_{w, m}(x)|$
\end{center} Here $n$ and $m$ are the sizes of each sample, and $n = m = 1000$ specifically in our case. 

The null hypothesis is that the samples are drawn from the same distribution, i.e.:
\begin{center}
    \textit{$H_0$: $F_{u, n}$ and $F_{w, m}$ are drawn from the same underlying distribution }
\end{center}

We reject $H_0$ at significance level $\alpha$ if:
\begin{center}
    $D_{n, m} > c(\alpha) \sqrt{\frac{n + m}{n \cdot m}} $
\end{center} Here $c(\alpha) = \sqrt{-\ln{\left ( \frac{\alpha}{2} \right)} \cdot \frac{1}{2}}$.








\subsection{Mean Adjacent Token Differences}
\label{sec:mean-adjacent-token}

From the Lorenz curve and Gini measure discussed in \S \ref{sec:ranked-prob-lorenz-curves}, a natural extension is to analyze the average increase in logit value between adjacent tokens. That is, we compute:
\begin{center}
    $\displaystyle \mathcal{I} = \frac{\sum_{i = 1}^{n - 1} \ell_{i + 1} - \ell{i}}{n - 1} $
\end{center} Here, $\ell_i$ is the logit at index $i$ on the Lorenz curve, and $n$ is the total number of tokens in the vocabulary.

Note that for a Kirchenbauer-watermarked LLM with logit perturbation $\delta$ and green list $G$ with proportion $\gamma$, we have an average logit increase of: 
\begin{align*}
    \mathcal{I_{W}} &= \frac{\sum_{i = 1}^{n - 1} (\ell_{i + 1} - \ell{i}) \mathds{1}[i \in G] }{n - 1} \\
    &= \frac{\gamma (n - 1) \delta + \sum_{i = 1}^{n - 1} (\ell_{i + 1} - \ell{i})}{n - 1}
\end{align*}

Taking the difference with the average logit increase of an unmarked model, $\mathcal{I_U}$, we have:
\begin{align*}
    \mathcal{I_W} - \mathcal{I_U} = \frac{\gamma (n - 1) \delta}{n - 1} = \gamma \delta
\end{align*}

Taking the above $I_W - I_U$ as inspiration, a simple identification procedure is to periodically compute $\mathcal{I}$ and observe how it varies over time. Notice that $I_W - I_U$ directly varies with $\gamma$ and $\delta$; that is, the strength of the watermark directly influences its detectability. For a strong watermark, variations in $\mathcal{I}$ will be obvious, while weaker watermarks will manifest subtler differences in $\mathcal{I}$.




\begin{table*}[t]
\caption{Tradeoffs between proposed watermarking identification algorithms. An ideal watermarking is not specific to Kirchenbauer and can detect \textbf{general watermarks}; does \textbf{not require access to logits}, which is common in publicly hosted models; is \textbf{sensitive to small $\delta$} and parameter values; is \textbf{robust against other distribution shifts} not induced by watermarks; and can be \textbf{performed in a single snapshot of time} without reference to previous distributions or tests.}
\label{tab:tradeoff}
\vskip 0.15in
\begin{center}
\begin{small}
\begin{sc}
\begin{tabular}{lccccc}
\toprule
Detection Method & General Watermarks & Logit-Free & $\delta$-Sensitive & Shift-Robust & Single-Shot \\
\midrule
RNG Divergence & $\surd$ & $\surd$ & $\surd$ & $\times$ & $\times$ \\
Mean Adjacent & $\times$ & $\times$ & $\surd$ & $\times$ & $\times$ \\
$\delta$-Amplification & $\times$ & $\times$ & $\surd$ & $\surd$ & $\surd$ \\
\bottomrule
\end{tabular}
\end{sc}
\end{small}
\end{center}
\vskip -0.1in
\end{table*}

\begin{table*}[bp]
\caption{$\delta$-Amplification algorithm produces the corresponding bimodality test dip and $p$-values for a small-$\delta$ watermarked Alpaca-LoRA model when using prompt prefixes randomly sampled from the Pile and OpenWebText datasets. Greater diversity in prompt prefix task and content increasingly induces bimodality and thus watermark identification. Notice that when only using Pile prompts, $\delta$-Amplification is only identify a watermarked model at strength $\delta = 7$, compared to $\delta = 5$ when using both OWT and Pile prompts.}
\centering
\vskip 0.15in
\begin{small}
\begin{sc}
\begin{tabular}{ccccccc}
\toprule
$\delta$ & P-Value (OWT \& Pile) & Dip (OWT \& Pile) & $p$ (Pile) & Dip (Pile) \\
\midrule
0 & 0.886 & 0.0017 & 0.908 & 0.0016 \\
1 & 1.0 & 0.00094 & 0.999 & 0.00097 \\
2 & 0.991 & 0.0013 & 1.0 & 0.00092 \\
3 & 0.900 & 0.0016 & 1.0 & 0.00093 \\
4 & 0.204 & 0.0025 & 1.0 & 0.00093 \\
5 & 0.0 & 0.00497 & 1.0 & 0.00093 \\
6 & 0.0 & 0.0087 & 0.947 & 0.0015 \\ 
7 & 0.0 & 0.012 & 0.0 & 0.0048 \\
8 & 0.0 & 0.017 & 0.0 & 0.013 \\
9 & 0.0 & 0.023 & 0.0 & 0.025 \\
10 & 0.0 & 0.033 & 0.0 & 0.037 \\ 
\bottomrule
\end{tabular}
\end{sc}
\end{small}
\label{tab:dip-p-values-combined}
\end{table*}

\subsection{Robustly Identifying Small-$\delta$ Watermarks}
\label{sec:delta-amp}


While \S \ref{sec:mean-adjacent-token} introduces a metric that will successfully detect a Kirchenbauer watermark for small $\delta$, it is sensitive to general logit distribution perturbations introduced by other scenarios, such as routine model updates. An identification method robust to general distribution shifts should rely on shift characteristics specific to a Kirchenbauer watermark.

Notably, a Kirchenbauer watermark will induce perceptible band separations in logit space. Figure \ref{fig:logit-separation} demonstrates an example of this phenomenon. Inspired by this observation, we draw an analogue between the separation of logit values into bands and the bimodality of logit frequencies. Under this reframing, testing for bimodality is equivalent to testing for the existence of a band gap. 

However, though this approach is robust to other distribution shifts, it does not yet consider small-$\delta$ perturbations. To handle such situations, we introduce the $\delta$-Amplification algorithm.


\begin{algo}[$\delta$-Amplification]
    Suppose we have a potentially watermarked LLM $\mathcal{L}$. We wish to detect if it is watermarked. We prompt $\mathcal{L}$ repeatedly as follows:
\begin{verbatim}
[Random string sampled from training 
datasets]. Now write me a story: 
\end{verbatim} Take the produced logits and average them across repetitions. If the resulting frequency of averaged logits is bimodal, conclude that $W_s(\mathcal{L})$ is watermarked. 

To recover the underlying Kirchenbauer watermark parameters, we estimate $\delta$ by measuring the distance between the peaks, and $\gamma$ by measuring their respective masses.
\end{algo}

Critically, as watermarks \citep{kirchenbauer2023watermark, aaronson2023} only use a fixed-size previous token window (rumored to be 5-tokens in OpenAI models) to determine green list indices, the green list partition across all prompts is the same under this algorithm, as every prompt ends in a fixed ``Now write me a story:'' suffix. Therefore, the output logit distributions all experience the same $\delta$ mask.

However, the model is still influenced by earlier tokens in the prompt, and thus exhibits differing logit values across prompts. Intuitively then, averaging distributions across different prompts reduces the variation of logits, but maintains the same effect of the $\delta$ perturbation. The averaged distribution thus amplifies the effects of a small-$\delta$ watermark. Figure \ref{fig:delta-amp-example} demonstrates an example of this effect.

We test for bimodality via the Hartigan dip test \cite{hartigan1985dip}. For a distribution with probability distribution function $f$, this test computes the largest absolute difference between $f$ and the unimodal distribution which best approximates it. 
\begin{align*}
    D(f) = \newinf_{g \in U} \sup_{x} |f(x) - g(x)|
\end{align*}

\begin{figure*}[h]
    \centering
    \subfloat[\centering Distribution of logit values prior to $\delta$-Amplification.] {{\includegraphics[scale=0.53]{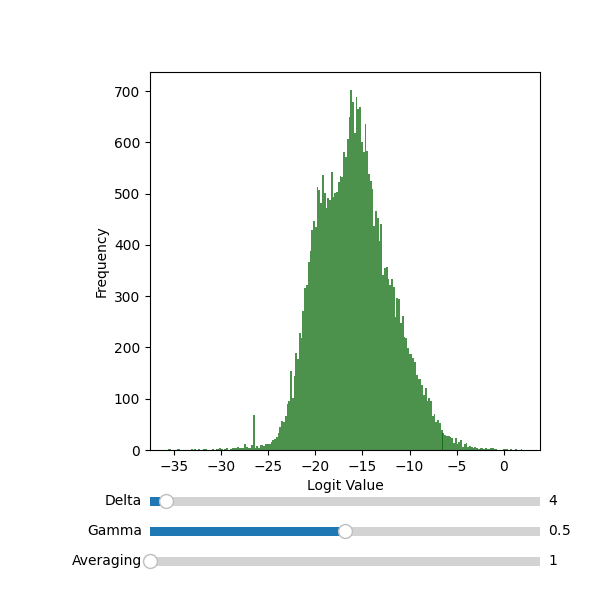}
    }}
    \qquad
    \subfloat[\centering Distribution of logit values after applying $\delta$-Amplification, averaging across 140 prompts.] {{\includegraphics[scale=0.53]{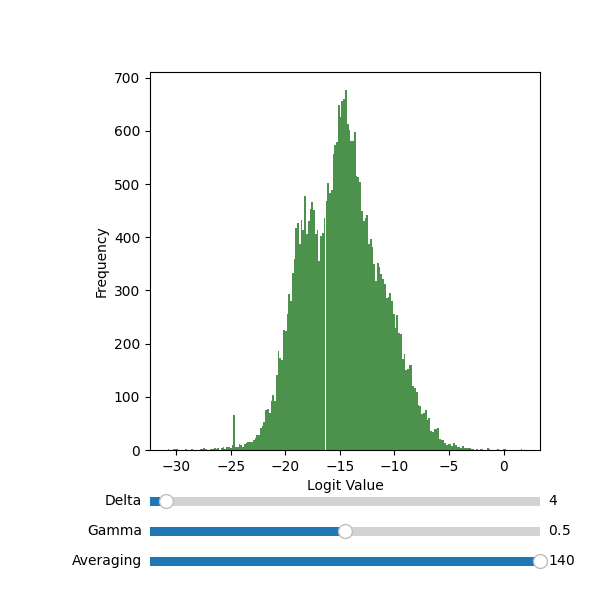}}}
    \caption{Distribution of Alpaca-LoRA logit values before and after $\delta$-Amplification for the ``Now write me a story:'' prompt. The $x$-axis is the logit value and $y$-axis is the frequency. Without $\delta$-Amplification, it is not clear whether the distribution of logit values exhibits bimodality. Bimodality emerges only after applying $\delta$-Amplification, enabling watermark identification.}
    \label{fig:delta-amp-example}
\end{figure*}

Here $U$ is the set of all unimodal distributions over $x$. The corresponding $p$-value is calculated as the probability of achieving a Dip score at least as high as $D$ from the nearest unimodal distribution.



\subsection{Tradeoffs Between Detection Algorithms}
The algorithms proposed above are all effective in different senses. \S \ref{sec:kolmogorov} introduced a RNG divergence approach to watermark detection that is not specific to Kirchenbauer watermarks, does not require access to logits and can thus be used directly on black-box public APIs, and is also sensitive to small $\delta$ watermarks. 

\S \ref{sec:mean-adjacent-token} introduced an adjacent token metric for analyzing Kirchenbauer watermarks that is sensitive to small-$\delta$.

Finally, we extended this approach in \S \ref{sec:delta-amp} to robustly handle small-$\delta$ watermarks, while preserving the single-shot criteria. Moreover, the $\delta$-Amplification approach lent itself nicely to statistical testing, specifically of bimodality. It is also the only identification method that can be performed in a single shot. That is, it does not require comparing behavior between a watermarked and unmarked model, and equivalently between a single model across multiple snapshots in time, as is the case for the previous two algorithms.

Each method has merit depending on the specific identification setting, but will also sacrifice certain desiderata. Table \ref{tab:tradeoff} summarizes these tradeoffs.

\subsection{Monitoring}
To detect watermarks in publicly hosted models, we set up monitoring scripts with our identification mechanisms that periodically query these models and compute the relevant tests and metrics. We are most interested in identifying watermarks in OpenAI models that are \textit{not} API-accessible, as we believe that UI-based versions of these models will be most susceptible to dishonest usage (e.g. students cheating will primarily use ChatGPT and not a Python script accessing API). As such, we also set up an agent to interact and monitor the UI-based version of these models.

\begin{table}[h!]
\caption{Kolmogorov-Smirnov test results on 1000-sample ``RNG`` distributions from models watermarked at varying strengths. A test is performed between 30 distributions generated from the model in each row against a random distribution generated from an unmarked model. The reported p-values are averaged across these 30 samples. Note that the first row of each section, where $\gamma = \delta = 0$, is the Kolmogorov-Smirnov test result between an unmarked model against an unmarked model.}
\label{tab:komolgorov}
\vskip 0.15in
\begin{center}
\begin{small}
\begin{sc}
\begin{tabular}{lcccr}
\toprule
Model & $\gamma$ & $\delta$ & Average P-Value \\
\midrule
Flan-T5-XXL & 0 & 0 & 0.80 \\
Flan-T5-XXL & 0.1 & 1 & 3.54e-7 \\
Flan-T5-XXL & 0.1 & 10 & 1.22e-9 \\
Flan-T5-XXL & 0.1 & 50 & 6.75e-7 \\
Flan-T5-XXL & 0.1 & 100 & 8.11e-8 \\
Flan-T5-XXL & 0.25 & 1 & 0.002 \\
Flan-T5-XXL & 0.25 & 10 & 6.47e-9 \\
Flan-T5-XXL & 0.25 & 50 & 2.59e-7 \\
Flan-T5-XXL & 0.25 & 100 & 1.33e-6 \\
Flan-T5-XXL & 0.5 & 1 & 0.00024 \\
Flan-T5-XXL & 0.5 & 10 & 0.057 \\
Flan-T5-XXL & 0.5 & 50 & 0.054 \\
Flan-T5-XXL & 0.5 & 100 & 0.054 \\
Flan-T5-XXL & 0.75 & 1 & 0.42 \\
Flan-T5-XXL & 0.75 & 10 & 0.22 \\
Flan-T5-XXL & 0.75 & 50 & 0.34 \\
Flan-T5-XXL & 0.75 & 100 & 0.23 \\
\addlinespace[0.75ex]
\hdashline
\addlinespace[0.75ex]
Alpaca-Lora & 0 & 0 & 0.63 \\
Alpaca-Lora & 0.1 & 1 & 1.40e-10 \\
Alpaca-Lora & 0.1 & 10 & 4.31e-37 \\
Alpaca-Lora & 0.1 & 50 & 4.31e-36 \\
Alpaca-Lora & 0.1 & 100 & 1.48e-35 \\
Alpaca-Lora & 0.25 & 1 & 3.10e-13 \\
Alpaca-Lora & 0.25 & 10 & 1.93e-10 \\
Alpaca-Lora & 0.25 & 50 & 4.52e-14 \\
Alpaca-Lora & 0.25 & 100 & 2.14e-11 \\
Alpaca-Lora & 0.5 & 1 & 4.17e-13 \\
Alpaca-Lora & 0.5 & 10 & 0.0089 \\
Alpaca-Lora & 0.5 & 50 & 0.066 \\
Alpaca-Lora & 0.5 & 100 & 0.06 \\
Alpaca-Lora & 0.75 & 1 & 0.00015 \\
Alpaca-Lora & 0.75 & 10 & 0.52 \\
Alpaca-Lora & 0.75 & 50 & 0.40 \\
Alpaca-Lora & 0.75 & 100 & 0.48 \\
\bottomrule
\end{tabular}
\end{sc}
\end{small}
\end{center}
\vskip -0.1in
\end{table}


\section{Results}
We perform experiments on our identification mechanisms using the Flan-T5-XXL and Alpaca-LoRA models due to their strong instruction-following capabilities but differing Byte-Pair Encoding and digit tokenization methods.

Table \ref{tab:komolgorov} displays the $p$-values resulting from the Kolmogorov-Test method. For each model and watermark strength, the method is performed across 30 independent instances of 1000-sample distributions generated from a Kirchenbauer-watermarked model. Specifically, we perform a test between each of the 30 distributions and a distribution generated by an unmarked model. Under this procedure, any model with distributions producing an average $p$-value less than 0.05 would be considered watermarked. The $p$-values are highest when comparing an unmarked distribution against an unmarked distribution, as expected. Notably, the $p$-values are extremely low for a majority of watermark strengths for both Flan-T5-XXL and Alpaca-LoRA. 

The results of the $\delta$-Amplification method and corresponding bimodality test are in Table \ref{tab:dip-p-values-combined}. Concretely, we sample a diverse range of prompt prefixes from Pile and OpenWebText via HuggingFace datasets and run tests on the logit value distributions from these generations. Notably, diversity in prompt prefix task and content enables uncorrelated variance in $\delta$, thus most effectively eliminating logit variance post-averaging. 


In particular, we observe that at $\delta \geq 5$, our method produces $p$-values less than 0.05, thus successfully identifying the presence of a watermark in the model. Critically, increasing the number of varied prefix prompts also increases identification potency. Namely, averaging logit distributions only across Pile prompts identifies Kirchenbauer-watermarked models only at strength $\delta \geq 7$, while averaging across both Pile and OpenWebText prompts identifies watermarked models at strength $\delta \geq 5$.

As such, both algorithms serve as strong baselines for watermark identification.




\section{Conclusion}
In this work, we develop a theoretical framework for understanding the watermark identification problem in large language models. We then provide three black-box baseline algorithms -- measuring divergence of RNG distributions, mean adjacent token differences in logits, and $\delta$-Amplification -- for identifying watermarks, which all fundamentally rely on the analysis of the distributions of model outputs, logits, and probabilities. Each algorithm trades off in different practical aspects, including identification generalizability, logit-free analysis, sensitivity to watermarks, robustness against general distribution shifts, and single-shot testing. Ultimately, we monitor publicly hosted models in an attempt to detect watermarks. Since we are the first to consider the problem of identifying watermarks in large language models, we hope that our framework and baselines serve as strong foundations for future work in this direction from the community.


\nocite{langley00}

\bibliography{paper}
\bibliographystyle{icml2023}

\newpage
\appendix
\onecolumn




\end{document}